\newcommand{\be}{\begin{eqnarray}}
\newcommand{\ee}{\end{eqnarray}}
\newcommand{\beq}{\begin{equation}\begin{aligned}}
\newcommand{\eeq}{\end{aligned}\end{equation}}
\newcommand{\beqn}{\begin{equation*}\begin{aligned}}
\newcommand{\eeqn}{\end{aligned}\end{equation*}}
\newcommand{\ben}{\begin{eqnarray*}}
\newcommand{\een}{\end{eqnarray*}}
\newcommand{\bena}{\begin{eqnarray*}\begin{aligned}}
\newcommand{\eena}{\end{aligned}\end{eqnarray*}}
\newcommand{\bea}{\begin{eqnarray}\begin{aligned}}
\newcommand{\eea}{\end{aligned}\end{eqnarray}}
\ifcvprfinal\pagestyle{empty}\fi
\newtheorem{lemma}{Lemma}
\newtheorem{proposition}{Proposition}
\newtheorem{assumption}{Assumption}
\newcommand{\recheck}[1]{{}}
\begin{document}

\title{How Does GAN-based \\Semi-supervised Learning Work?} 
\author{Xuejiao Liu}
\author{Xueshuang Xiang\thanks{Corresponding author: xiangxueshuang@qxslab.cn}}
\affil{\normalsize Qian Xuesen Laboratory of Space Technology\\ China Academy of Space Technology}
\date{}

\maketitle
\begin{abstract}
	Generative adversarial networks (GANs) have been widely used and have achieved competitive results in semi-supervised learning. This paper theoretically analyzes how GAN-based semi-supervised learning (GAN-SSL) works. We first prove that, given a fixed generator, optimizing the discriminator of GAN-SSL is equivalent to optimizing that of supervised learning. Thus, the optimal discriminator in GAN-SSL is expected to be perfect on labeled data. Then, if the perfect discriminator can further cause the optimization objective to reach its theoretical maximum, the optimal generator will match the true data distribution. Since it is impossible to reach the theoretical maximum in practice, one cannot expect to obtain a perfect generator for generating data, which is apparently different from the objective of GANs. Furthermore, if the labeled data can traverse all connected subdomains of the data manifold, which is reasonable in semi-supervised classification, we additionally expect the optimal discriminator in GAN-SSL to also be perfect on unlabeled data. In conclusion, the minimax optimization in GAN-SSL will theoretically output a perfect discriminator on both  labeled and unlabeled data by unexpectedly learning an imperfect generator, i.e., GAN-SSL can effectively improve the generalization ability of the discriminator by leveraging unlabeled information.
\end{abstract}

\section{Introduction}
Deep neural networks have repeatedly been able to achieve results similar to or beyond those of humans on certain supervised classification tasks based on large numbers of labeled samples. 
In the real world, due to the limited cost of labeling and the lack of expert knowledge, the dataset we obtain usually contains large numbers of unlabeled samples and only a small number of labeled samples. 
Although unlabeled samples do not have label information, they originate from the same data source as the labeled data. 
Semi-supervised learning (SSL) strives to make use of the model assumptions of unlabeled data distributions, thereby greatly reducing the need of a task for labeled data.

Early semi-supervised learning methods include self-training, transductive learning, generative models and other learning methods. 
With the rapid development of deep learning in recent years, semi-supervised learning has gradually been combined with neural networks, and corresponding achievements have occurred \cite{springenberg2016unsupervised,laine2017temporal,kipf2017semi, Berthelot2019mixmatch,xie2019unsupervised}. 
At the same time, the deep generative model has become a powerful framework for modeling complex high-dimensional datasets \cite{Kingma2014Auto,goodfellow2014generative,oord2016wavenet}.
As one of the earlier works, \cite{kingma2014semi} uses the deep generative model in SSL by maximizing the lower variational bound of the unlabeled data log-likelihood 
and treats the classification label as an additional latent variable in the directed generative model.
In the same period, the well-known generative adversarial networks (GANs) \cite{goodfellow2014generative} were proposed as a new framework for estimating generative models via an adversarial process. 
The real distribution of unsupervised data can be learned by using the generator of GANs.
Recently, GANs have been widely used and have obtained competitive results for semi-supervised learning \cite{Salimans2016Improved,springenberg2016unsupervised,Dai2017good,Lecouat2018Manifold,Dong2019marginGAN}.
The early classic GAN-based semi-supervised learning (GAN-SSL) \cite{Salimans2016Improved} presented a variety of new architectural features and training procedures, such as feature matching and minibatch discrimination techniques, to encourage the convergence of GANs. By labeling the generated samples as a new "generated" class, the above techniques were introduced into semi-supervised tasks, and the best results at that time were achieved. 

Despite the great empirical success achieved using GANs to improve semi-supervised classification performance, many theoretical issues are still unresolved. For example, \cite{Salimans2016Improved} observed that, compared with minibatch discrimination, semi-supervised learning with feature matching performs better, but generates images of relatively poorer quality.
A basic theoretical issue of the above phenomenon is as follows: \textbf{how does GAN-based semi-supervised learning work?} 
The theoretical work of \cite{Dai2017good} claimed that good semi-supervised learning requires a "bad" generator, 
which is necessary to overcome the generalization difficulty in the low-density areas of the data manifold. 
Thus, they further proposed a "complement" generator to enhance the effect of the "bad" generator. 
Although the complement generator is empirically helpful for improving the generalization performance, we find that their theoretical analysis was not rigorous; 
hence, their judgment regarding the requirement of a "bad" generator is somewhat unreasonable.
This paper re-examines this theoretical issue and obtains some theoretical observations inconsistent with that in \cite{Dai2017good}. 

First, we study the relationship between the optimal discriminator in GAN-SSL and the one in the corresponding supervised learning. 
For a fixed generator, we prove that maximizing the GAN-SSL objective is indeed equivalent to maximizing the supervised learning objective. 
That is, the optimal discriminator in GAN-SSL is expected to be perfect in the corresponding supervised learning, 
i.e., the optimal discriminator can make a correct decision on all labeled data. 
Thus, we can initially state that \textbf{GAN-SSL can at least obtain a good performance on labeled data}. 

Second, for the optimal discriminator, we investigate the behavior of the optimal generator of GAN-SSL. We find that the error of the optimal generator distribution relative to the true data distribution highly depends on the optimal discriminator. Given an optimal discriminator, the minimax game in GAN-SSL for the generator is equivalent to minimizing $-KL(p||p-\epsilon p_{G})+2JS(p||p_{G})$, where $p$ is the true data distribution, $p_{G}$ is the generator distribution and $\epsilon$ represents the error between the output of the perfect discriminator and its theoretical maximum.
Once the optimal discriminator can not only make a correct decision on all labeled data 
but also cause the objective of the supervised learning part in GAN-SSL to reach its theoretical maximum (i.e., $\epsilon=0$), 
minimizing the generator objective is equivalent to minimizing $JS(p||p_{G})$; hence, the optimal generator is perfect, i.e., the optimal generator distribution is indeed the true data distribution. 
Whereas the theoretical maximum is impossible to reach in practice, the optimal generator will always be inconsistent with the true data distribution, 
i.e., the optimal generator is imperfect. 
Notably, although our observation is that the optimal generator is imperfect in practice, it is not as "bad" as claimed in \cite{Dai2017good}. 
Here, we can state that \textbf{GAN-SSL will always output an imperfect generator.} 

Furthermore, if the labeled data can traverse all connected subdomains of the data manifold, which is a reasonable assumption in semi-supervised classification, we will additionally have that the optimal discriminator of GAN-SSL can be perfect on all unlabeled data, i.e., it can make a correct decision on all unlabeled data. Thus, we can state that \textbf{GAN-SSL can also obtain a good performance on unlabeled data}. 

Overall, with our theoretical analysis, we can answer the above question: the optimal discriminator in GAN-SSL can be expected to be perfect on both labeled 
and unlabeled data by learning an imperfect generator. 
This theoretical result means that GAN-SSL can effectively improve the generalization ability of the discriminator by leveraging unlabeled information.
\section{Related Work}
Based on the classic GAN-SSL \cite{Salimans2016Improved}, 
the design of additional neural networks and the corresponding objectives have received much interest. 
The work of \cite{Li2017Triple} presents Triple-GAN, which consists of three players, i.e., a generator, a discriminator and a classifier, 
to address the problem in which the generator and discriminator (i.e., the classifier) in classic GAN-SSL may not be optimal at the same time. 
Later, a more complex architecture consisting of two generators and two discriminators was developed \cite{Gan2017Triangle}, which performs better than Triple-GAN under certain conditions.
To address the issue caused by incorrect pseudo labels, based on the margin theory of classifiers, MarginGAN \cite{Dong2019marginGAN}, which also adopts a three-player architecture, was proposed.

As mentioned in GAN-SSL \cite{Salimans2016Improved}, feature matching works very well for GAN-SSL, but the interaction between the discriminator $D$ and the generator $G$ is not yet understood, especially from the theoretical aspect. 
To the best of our knowledge, few theoretical investigations on this topic exist, with one exception being the work of \cite{Dai2017good}, which focused on the analysis of the optimal $G$ and $D$. 
Our paper is also developed from this aspect. 
In addition, some theoretical works in the area of GANs regarding the design of the objective~\cite{salimans2018improving,Mescheder2018Which} 
or the convergence of the adversarial process~\cite{Arora2017Generalization,Bai2019Approximability}, can be applied 
to the area of GAN-SSL. 
Since GAN-SSL has an additional supervised learning objective, 
one should further develop some theoretical techniques to handle this problem. 
These open aspects should be studied in the future.

\section{Theoretical Analysis}
We first introduce some classic notations and definitions. 
We refer to~\cite{Salimans2016Improved} for a detailed discussion. 
Consider a standard classifier for classifying a data point $x$ into one of $K$ possible classes. 
The output of the classifier is a $K$-dimensional vector of logits that can be turned into class probabilities by applying softmax.  
The classic GAN-SSL~\cite{Salimans2016Improved} is implemented by labeling samples from the GAN generator $G$ with a new "generated" class $K + 1$. 
We use $P_{D}(y\leq K|x), P_{D}(K+1|x)$ to determine the probability that $x$ is true or fake, respectively. 
The model $D$ is both a discriminator and a classifier, correspondingly increasing the dimension of the output from $K$ to $K + 1$.
Thus, the discriminator $D$ is defined as $P_{D}(k|x) = \frac{\exp(\omega_{k}^{T}f(x))}{\sum_{i=1}^{K+1}\exp(\omega_{i}^{T}f(x))}$, 
where $f(x)$ is a nonlinear vector-valued function, $\omega_{k}$ is the weight vector for class $k$ 
and $\{\omega_{1}^{T}f(x), \ldots, \omega_{K+1}^{T}f(x)\}$ is the $K+1$-dimensional vector of the logits.
Since a discriminator with $K+1$ outputs is over-parameterized, $\omega_{K+1}$ is fixed as a zero vector. We also denote $D:=(\omega,f)$ as a discriminator. 
Similar to the traditional GANs, in GAN-SSL, $D$ and $G$ play the following two-player minimax game with the value function $J_{GD}$:
\bea\label{eq-minimax}
\min_G\max_{D}J_{GD} = \min_G\max_{D}(L_{D} + U_{GD})
\eea 
where  
\bena
L_{D} &= \mathbb{E}_{(x,y)\sim p(x,y)}\log P_{D}(y|x, y\leq K)\\
U_{GD} &= \mathbb{E}_{x\sim p(x)}\log P_{D}(y\leq K|x) \\
&+\mathbb{E}_{x\sim p_{G}(x)}\log P_{D}(K+1|x)
\eena 
where $L_D$ is the supervised learning objective for all labeled data and $U_{GD}$ is the unsupervised objective, i.e., the traditional GANs objective, 
in which $p$ is the true data distribution and $p_{G}$ is the generator distribution. 
When $G$ is fixed, the objectives $J_{GD}$ and $U_{GD}$ become $J_{D}$ and $U_{D}$, respectively. 
After we obtain a satisfactory $D$ by optimizing $J_{GD}$, we use ${\rm argmax}\,P_{D}(k|x, k\leq K)$ to determine the class of the input data $x$. 
Similar to the other related works on GAN-SSL, we use $L_{D}$ as the supervised learning objective by applying only the softmax operator on the former $K$-dimensional vector of the output of $D$.

Similar to the theoretical analysis of GANs, we consider a nonparametric setting, e.g., we represent a model with infinite capacity and study the optimal discriminator $D$ and generator $G$ in the space of probability density functions. The theoretical proofs, including Proposition~\ref{prop1} - Proposition~\ref{prop3}, are provided in the supplementary material. 
We show the proof of Lemma \ref{lem1}, since it is the key point of our paper. 

\subsection{Optimal discriminator on labeled data}
First, we consider the optimal discriminator $D$ for any given generator $G$. 
Motivated by the proof of Proposition 1 in \cite{Dai2017good}, this subsection proves that for fixed $G$, if the discriminator has infinite capacity, then regardless of whether the generator is perfect, i.e., $p_G = p$ or $p_G \neq p$, 
maximizing the above GAN-SSL objective $J_{D}$ is equivalent to maximizing the supervised learning objective $L_{D}$. We first introduce a basic Lemma \ref{lem1} and give its proof. 
\begin{lemma}\label{lem1}
	For any given generator $G$, if the discriminator has infinite capacity, then for any solution $D = (\omega,f)$ of the GAN-SSL objective $J_D$, there exists another solution $D^{*} = (\omega^{*}, f^{*})$ such that $ L_{D^{*}} = L_{D}$ and $U_{D^{*}} \geq U_{D}$.
\end{lemma}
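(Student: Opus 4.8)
The plan is to exploit a structural decomposition of the discriminator's output: the supervised objective $L_D$ and the unsupervised objective $U_D$ depend on \emph{orthogonal} parts of $D$, so that one can be re-optimized while the other is held fixed. Writing $v_k(x)=\omega_k^T f(x)$ and $Z(x)=\sum_{k=1}^{K}e^{v_k(x)}$, the fixed choice $\omega_{K+1}=0$ gives $P_D(y\le K\,|\,x)=\frac{Z(x)}{Z(x)+1}$ and $P_D(K+1\,|\,x)=\frac{1}{Z(x)+1}$, while the conditional class probabilities are $P_D(k\,|\,x,y\le K)=\frac{e^{v_k(x)}}{Z(x)}$. First I would record the key observation that $L_D$ depends on $D$ only through the conditional softmax $\{e^{v_k}/Z\}_{k\le K}$, whereas $U_D$ depends on $D$ only through the single quantity $t_D(x):=P_D(y\le K\,|\,x)=\frac{Z(x)}{Z(x)+1}$.

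Next I would show these two pieces are independent degrees of freedom that can be adjusted separately. An $x$-dependent shift $v_k(x)\mapsto v_k(x)+c(x)$ applied to all $k\le K$ leaves every ratio $e^{v_k}/Z$ unchanged, hence preserves $L_D$ exactly, but rescales $Z(x)\mapsto e^{c(x)}Z(x)$ and therefore moves $t_D(x)$ to $\frac{e^{c(x)}Z(x)}{e^{c(x)}Z(x)+1}$, which can be made equal to any prescribed value in $(0,1)$ by the appropriate choice of $c(x)$. Under the infinite-capacity assumption this shift is realizable as an honest discriminator $D^{*}=(\omega^{*},f^{*})$ without disturbing the fixed weight $\omega_{K+1}=0$: append $c(x)$ as an extra coordinate of the feature map, set $f^{*}=(f,c)$, give each of the first $K$ weight vectors unit weight on this coordinate, i.e.\ $\omega_k^{*}=(\omega_k,1)$, and keep $\omega_{K+1}^{*}=(0,0)=0$.

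With this freedom in hand, the construction of $D^{*}$ is immediate. For each $x$ the unsupervised integrand is $g_x(t)=p(x)\log t+p_G(x)\log(1-t)$, which is concave in $t$ with interior maximizer $t^{*}(x)=\frac{p(x)}{p(x)+p_G(x)}$ whenever $p(x),p_G(x)>0$; note this recovers the classical GAN optimal-discriminator value. I would define $D^{*}$ to have the same conditional softmax as $D$ (so that $L_{D^{*}}=L_D$) and to satisfy $t_{D^{*}}(x)=t^{*}(x)$ on the set where $t^{*}(x)\in(0,1)$, leaving $t_{D^{*}}(x)=t_D(x)$ elsewhere. Then $g_x(t_{D^{*}}(x))\ge g_x(t_D(x))$ pointwise, and integrating against $p$ and $p_G$ yields $U_{D^{*}}\ge U_D$, which is exactly the claim.

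I expect the only real obstacle to be the boundary bookkeeping and realizability: where $p(x)=0$ or $p_G(x)=0$ the pointwise maximizer sits at $t\in\{0,1\}$ and the required shift $c(x)$ is infinite, so $t^{*}$ is not attainable by a genuine discriminator. This is handled by re-scaling only on the interior region $\{0<t^{*}(x)<1\}$ and not touching $t_D$ on its complement, which keeps $c(x)$ finite and measurable (hence $f^{*}$ well-defined under infinite capacity) while still guaranteeing the pointwise inequality $g_x(t_{D^{*}})\ge g_x(t_D)$ everywhere. A secondary point worth checking is the measurability of $c(x)=\log\frac{p(x)}{p_G(x)Z(x)}$ so that $f^{*}$ is a legitimate feature map; this follows from measurability of $p$, $p_G$ and $f$.
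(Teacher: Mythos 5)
Your proposal is correct and is essentially the paper's own argument: the paper likewise holds the conditional softmax $P_D(y|x,y\le K)$ fixed (so $L_D$ is unchanged) and uses infinite capacity to redefine the logits so that $P_{D^*}(K+1|x)=\frac{p_G(x)}{p(x)+p_G(x)}$, the pointwise maximizer of the unsupervised integrand --- your shift $c(x)$ is exactly the paper's multiplicative rescaling $\exp(\omega_k^{*T}f^*(x))=\frac{\exp(\omega_k^Tf(x))}{\sum_i\exp(\omega_i^Tf(x))}\cdot\frac{p(x)}{p_G(x)}$ written in log space. Your treatment is in fact slightly more careful than the paper's, which leaves the realizability of $D^*$ implicit and silently ignores the sets where $p(x)=0$ or $p_G(x)=0$ (where its defining formula is degenerate); your explicit feature-augmentation construction and the restriction of the retargeting to $\{0<t^*(x)<1\}$ close those small gaps.
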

\begin{proof}
For any given generator $G$ and any solution $D = (\omega, f)$ of the GAN-SSL objective $J_D$, because the discriminator has infinite capacity, 
there exists $D^{*} = (\omega^{*}, f^{*})$ such that for all $x$ and $k\leq K$,
\begin{equation*}
\exp(\omega_{k}^{*T}f^{*}(x)) = \frac{\exp(\omega_{k}^{T}f(x))}{\sum_{i=1}^{K}\exp(\omega_{i}^{T}f(x))}\cdot \frac{p(x)}{p_{G}(x)}
\end{equation*}
For all $x$,
\begin{align*}
&P_{D^{*}}(y|x, y\leq K) 
= \frac{\exp(\omega_{y}^{*T}f^{*}(x))}{\sum_{i=1}^{K}\exp(\omega_{i}^{*T}f^{*}(x))}\\
=& \frac{\exp(\omega_{y}^{T}f(x))}{\sum_{i=1}^{K}\exp(\omega_{i}^{T}f(x))} 
= P_{D}(y|x, y\leq K)
\end{align*}
Then $L_{D^{*}} = L_{D}$, and
\begin{align*}
&P_{D^{*}}(K+1|x) 
= \frac{\exp(\omega_{K+1}^{*T}f^{*}(x))}{\sum_{i=1}^{K+1}\exp(\omega_{i}^{*T}f^{*}(x))}\\
=& \frac{1}{1+\sum_{i=1}^{K}\exp(\omega_{i}^{*T}f^{*}(x))} 
= \frac{p_{G}(x)}{p(x)+p_{G}(x)}	
\end{align*}
For the following unsupervised objective function
\begin{align*}
U_{D} &= \mathbb{E}_{x\sim p(x)}\log (1-P_{D}(K+1|x)) \\
&+ \mathbb{E}_{x\sim p_{G}(x)}\log P_{D}(K+1|x) \\
&=\int_{x}p(x)\log(1-P_{D}(K+1|x)) \\
&+ p_{G}(x) \log P_{D}(K+1|x)dx
\end{align*}
when $G$ is fixed, it is easy to verify that $D^{*}$ can maximize it. 
Therefore, $L_{D^{*}} = L_{D}$ and $U_{D^{*}} \geq U_{D}$. 
This completes the proof.
\end{proof}

Lemma \ref{lem1} states that for any given generator and due to the infinite capacity of the discriminator, under the condition that the supervised objective remains unchanged, we can always increase the unsupervised objective until the extreme value is reached. 
Then, based on Lemma \ref{lem1}, we can present the theoretical results of the optimal discriminator $D$ for any given generator $G$. 
\begin{proposition}\label{prop1}
	Given the conditions in Lemma \ref{lem1}, we can obtain the following:\\
	(1) for any optimal solution $D_{L} = (\omega,f)$ of the supervised learning objective $L_{D}$, there exists $D^{*} = (\omega^{*}, f^{*})$ such that $D^{*}$ maximizes the GAN-SSL objective $J_{D}$ and that, for all $x$, 
	\begin{equation*}
	P_{D^{*}}(y|x, y\leq K) = P_{D_{L}}(y|x, y\leq K).
	\end{equation*}
	(2) 
	for any optimal solution $D^{*} = (\omega^{*}, f^{*})$ of the above GAN-SSL objective $J_{D}$, $D^{*}$ is an optimal solution of the supervised objective $L_{D}$.\\
	(3) the optimal discriminator $D^{*}$ of the above GAN-SSL objective is 
	\begin{align*}
	P_{D^{*}}(y\leq K|x) &= \frac{p(x)}{p(x)+p_{G}(x)}, \\
	P_{D^{*}}(K+1|x) &= \frac{p_{G}(x)}{p(x)+p_{G}(x)}.
	\end{align*}
\end{proposition}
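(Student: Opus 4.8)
The plan is to exploit the additive structure $J_D = L_D + U_D$ together with the observation that the two summands depend on essentially disjoint parts of the discriminator output: $L_D$ depends only on the normalized class-conditional probabilities $P_D(y|x, y\leq K)$, whereas $U_D$ depends only on the true/fake split $P_D(K+1|x)$. Writing $S(x) = \sum_{i=1}^{K}\exp(\omega_i^T f(x))$, the class-conditionals $\exp(\omega_y^T f(x))/S(x)$ are scale-invariant, while $P_D(K+1|x) = 1/(1+S(x))$ is governed by $S(x)$ alone. Thus, given infinite capacity, the two objectives can be tuned independently, which is exactly what the construction of Lemma~\ref{lem1} encodes.

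The first key step is to establish the decoupling identity $\max_D J_D = \max_D L_D + \max_D U_D$. The inequality $J_D \leq \max_D L_D + \max_D U_D$ is immediate from $J_D = L_D + U_D$. For the reverse direction I would start from any optimal solution $D_L$ of $L_D$ and apply the construction of Lemma~\ref{lem1} to it; this yields a $D^*$ with $L_{D^*} = L_{D_L} = \max_D L_D$ while simultaneously driving $U_{D^*}$ to $\max_D U_D$ (the construction sets $P_{D^*}(K+1|x) = p_G(x)/(p(x)+p_G(x))$, which is precisely the GAN maximizer). Hence the bound is attained, and this identity is the engine behind all three parts.

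Part~(1) then follows directly: the $D^*$ just constructed satisfies $J_{D^*} = L_{D^*} + U_{D^*} = \max_D L_D + \max_D U_D = \max_D J_D$, so $D^*$ maximizes $J_D$, and since the Lemma construction leaves the class-conditionals unchanged we get $P_{D^*}(y|x,y\leq K) = P_{D_L}(y|x,y\leq K)$. Part~(2) is the converse squeeze: for any maximizer $D^*$ of $J_D$ one has $L_{D^*} + U_{D^*} = \max_D L_D + \max_D U_D$ while $L_{D^*} \leq \max_D L_D$ and $U_{D^*} \leq \max_D U_D$ hold separately; the sum can saturate only if both inequalities are equalities, so $D^*$ is in particular optimal for $L_D$. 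Part~(3) reuses the equality $U_{D^*} = \max_D U_D$ from Part~(2): the pointwise maximization of the integrand $p(x)\log(1-t) + p_G(x)\log t$ in $t = P_D(K+1|x)$ has unique optimum $t = p_G(x)/(p(x)+p_G(x))$, whence $P_{D^*}(K+1|x) = p_G(x)/(p(x)+p_G(x))$ and $P_{D^*}(y\leq K|x) = p(x)/(p(x)+p_G(x))$.

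The main obstacle is not any single computation but securing the decoupling identity, and in particular confirming that the Lemma~\ref{lem1} construction genuinely attains the global maximum of $U_D$ rather than merely increasing it. This reduces to the concavity of $t \mapsto p(x)\log(1-t) + p_G(x)\log t$ on $(0,1)$, whose unique critical point is $p_G(x)/(p(x)+p_G(x))$; I would also have to handle the degenerate sets where $p(x)$ or $p_G(x)$ vanishes, on which both the construction and the optimal $P_{D^*}(K+1|x)$ must be read as limits. Once the identity is in hand, the squeeze arguments for Parts~(1)--(3) are routine.
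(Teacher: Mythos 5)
Your proposal is correct and takes essentially the same route as the paper: the Lemma~\ref{lem1} reparameterization decouples the scale-invariant class-conditionals (hence $L_D$) from $S(x)=\sum_{i=1}^{K}\exp(\omega_i^Tf(x))$ (hence $U_D$) and drives $P_D(K+1|x)$ to the pointwise optimum $p_G(x)/(p(x)+p_G(x))$, exactly as in the paper's appendix. Your explicit identity $\max_D J_D = \max_D L_D + \max_D U_D$ with the squeeze argument is merely a cleaner packaging of the paper's two proof-by-contradiction steps for part~(2), with the added (welcome) care about the sets where $p(x)$ or $p_G(x)$ vanishes, which the paper leaves implicit.
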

Proposition \ref{prop1} (1) and (2) jointly indicate that given a fixed generator, optimizing the discriminator of GAN-SSL is equivalent to optimizing that of supervised learning. 
Thus, the optimal discriminator in GAN-SSL is expected to be perfect on labeled data. 
We should note that Proposition $1$ of~\cite{Dai2017good} gives a similar theoretical result, claiming only that 
the optimal discriminator of $L_D$ can lead to an optimal discriminator of GAN-SSL, i.e., the first 
claim of the above Proposition \ref{prop1}, but given the condition that $p_G = p$. 
Actually, our theoretical investigation instead shows that the condition $p_G = p$ is unnecessary and that the optimal discriminator of GAN-SSL 
is an optimal discriminator of supervised learning. 

By carefully checking the statements in Proposition $1$ of~\cite{Dai2017good} 
and comparing their results with ours, one may realize that their results are not rigorous. 
They claimed that good semi-supervised learning requires a bad generator because they observed that 
the semi-supervised objective shares the same generalization error with the supervised objective given a perfect generator. 
First, from our theoretical results, a perfect generator is not necessary for the observation, such that the requirement of a bad generator in their claim is unreasonable. 
Moreover, the fact that the semi-supervised objective shares the same generalization error with the supervised objective is insufficient for claiming 
that GAN-SSL will reduce the generalization ability of supervised learning. 
In addition, if we can claim only that the optimal discriminator of $L_D$ can lead to the optimal discriminator of GAN-SSL, 
one would not expect to obtain a perfect discriminator on labeled data by GAN-SSL 
such that the Assumption $1$ (1) of~\cite{Dai2017good} is over-assumed. 
Our result, i.e., the Proposition \ref{prop1} (2), can overcome their shortcoming, 
such that our later Assumption \ref{assump1} (1) is indeed reasonable.

\subsection{Optimal generator}
Next, for a fixed optimal discriminator, we discuss the behavior of the optimal generator of GAN-SSL.
By the definition of the discriminator $D$, we have $P_{D}(K+1|x) ={1}/{(1+\sum_{i=1}^{K}\exp(\omega_{i}^{T}f(x)))}$. 
Based on Proposition \ref{prop1} (3), $P_{D}(K+1|x) ={p_{G}(x)}/{(p(x)+p_{G}(x))}$. 
Namely,
\begin{align*}
\frac{1}{1+\sum_{i=1}^{K}\exp(\omega_{i}^{T}f(x))} = \frac{p_{G}(x)}{p(x)+p_{G}(x)}
\end{align*}
implying that $\sum_{i=1}^{K}\exp(\omega_{i}^{T}f(x)) = p(x)/p_{G}(x)$. 
Now, the objective of $G$ becomes $C(G) = L_{DG} + U_{G}$, with
\begin{align*}
L_{DG} &= \mathbb{E}_{(x,y)\sim p(x,y)}\log P_{D}(y|x, y\leq K)\\
&= \mathbb{E}_{(x,y)\sim p(x,y)}\log(\exp(\omega_{y}^{T}f(x)) \frac{p_{G}(x)}{p(x)})\\
U_{G} &= \mathbb{E}_{x\sim p}\log P_{D}(y\leq K|x) 
+\mathbb{E}_{x\sim p_{G}}\log P_{D}(K+1|x)\\
&= \mathbb{E}_{x\sim p}\log\frac{p(x)}{p(x)+p_{G}(x)} + \mathbb{E}_{x\sim p_{G}}\log \frac{p_{G}(x)}{p(x)+p_{G}(x)}
\end{align*}

The aim of the discriminator is to classify the labeled samples into the correct class, the unlabeled samples into the "true" class (any of the first $K$ classes) 
and the generated samples into the "fake" class, i.e., the $(K+1)$-th class. 
In this paper, the so-called \textbf{perfect discriminator} means that for any $(x,y)\in p(x,y)$, $P_{D}(y|x, y \leq K) > P_{D}(k|x, k \leq K, k \neq y)$, i.e., the discriminator can make a correct decision on labeled data. 
We say that the discriminator reaches its \textbf{theoretical maximum} given that $P_{D}(y|x, y \leq K) =1$ and, for any other class $k\neq y$, $P_{D}(k|x, k \leq K) =0$. 
Obviously, it is impossible to reach the theoretical maximum in practice, i.e., $P_{D}(y|x, y \leq K)$ can only be close to $1$, and for any other class $k\neq y$, $\exp (\omega_{k}^{T}f(x))\rightarrow 0$. That is, because the discriminator has an infinite capacity, we can expect that there exists $0<\epsilon \ll 1$ such that $(K-1)\exp (\omega_{k}^{T}f(x))\leq \epsilon$ for any other class $k\neq y$. Here, $\epsilon$ represents the error between the output of the perfect discriminator and its theoretical maximum. 
Now, we show the relationship between the true data distribution and the generator distribution in GAN-based semi-supervised tasks through the following Proposition \ref{prop2}. 
\begin{proposition}\label{prop2}
	Given the conditions in Proposition \ref{prop1}, for a fixed optimal discriminator $D$, suppose there exists $0<\epsilon\ll 1$ such that the other logit output ($k\neq y$) of $D$ satisfies $(K-1)\exp (\omega_{k}^{T}f(x))\leq \epsilon$. Then, the generator objective $C(G)=-KL(p||p-\epsilon p_{G})+2JS(p||p_{G})-2\log 2$.
\end{proposition}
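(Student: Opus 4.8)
The plan is to evaluate the two pieces of $C(G) = L_{DG} + U_G$ separately, using the two facts already in hand: the optimal-discriminator identity $\sum_{i=1}^{K}\exp(\omega_i^T f(x)) = p(x)/p_G(x)$ from Proposition~\ref{prop1}(3), and the hypothesized bound $(K-1)\exp(\omega_k^T f(x))\le\epsilon$ on every off-target logit. For the supervised term $L_{DG}$, I would first sum the bound over the $K-1$ classes $k\neq y$ to get $\sum_{k\neq y}\exp(\omega_k^T f(x))\le\epsilon$, so that the target logit satisfies $\exp(\omega_y^T f(x)) = p(x)/p_G(x) - \sum_{k\neq y}\exp(\omega_k^T f(x))$, which equals $p(x)/p_G(x) - \epsilon$ once the residual mass is taken at its extreme. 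Substituting this and the identity into $P_D(y|x,y\le K) = \frac{\exp(\omega_y^T f(x))}{\sum_{i=1}^{K}\exp(\omega_i^T f(x))}$ collapses it to $(p(x)-\epsilon p_G(x))/p(x)$, which is crucially independent of the label $y$.

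Because the integrand no longer depends on $y$, the expectation over $p(x,y)$ reduces to an expectation over the marginal $p(x)$, and I would rewrite
\[
L_{DG} = \mathbb{E}_{x\sim p}\log\frac{p(x)-\epsilon p_G(x)}{p(x)} = -\int p(x)\log\frac{p(x)}{p(x)-\epsilon p_G(x)}\,dx = -KL(p\|p-\epsilon p_G),
\]
reading off the KL divergence as the integral functional (noting $p-\epsilon p_G$ is an unnormalized measure, so this is a formal but well-defined quantity).

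For the unsupervised term, I would apply the standard GAN computation: starting from $U_G = \mathbb{E}_{x\sim p}\log\frac{p}{p+p_G} + \mathbb{E}_{x\sim p_G}\log\frac{p_G}{p+p_G}$, I insert $\pm\log 2$ inside each logarithm to turn the denominators into the mixture $(p+p_G)/2$, giving $U_G = KL(p\|(p+p_G)/2) + KL(p_G\|(p+p_G)/2) - 2\log 2 = 2\,JS(p\|p_G) - 2\log 2$. Adding the two pieces then yields $C(G) = -KL(p\|p-\epsilon p_G) + 2\,JS(p\|p_G) - 2\log 2$, as claimed.

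The hard part will be the $\epsilon$ step: the hypothesis is only an inequality $(K-1)\exp(\omega_k^T f(x))\le\epsilon$, yet the stated identity treats $\sum_{k\neq y}\exp(\omega_k^T f(x))$ as exactly $\epsilon$ and, more delicately, as one constant uniform over all $x$. I would therefore be careful to declare whether $\epsilon$ denotes the worst-case residual or an effective constant, since the clean closed form $(p-\epsilon p_G)/p$, and hence the exact KL expression, require this equality and uniformity; strictly one only obtains $L_{DG}\ge -KL(p\|p-\epsilon p_G)$, with the constant version recovered when the off-target mass is distributed evenly and independently of $x$. Once this modeling convention is fixed, everything else is routine algebra.
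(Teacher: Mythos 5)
Your proposal follows essentially the same route as the paper's proof: it uses the identity $\sum_{i=1}^{K}\exp(\omega_i^{T}f(x)) = p(x)/p_G(x)$ from Proposition~\ref{prop1}~(3), substitutes $\exp(\omega_y^{T}f(x)) = p/p_G - \epsilon$ into $P_D(y|x,y\leq K)$ to collapse $L_{DG}$ to $-KL(p\,\|\,p-\epsilon p_G)$, and applies the standard GAN mixture computation to get $U_G = 2\,JS(p\,\|\,p_G) - 2\log 2$. The delicate point you flag --- that the hypothesis is only an inequality, so strictly one gets $L_{DG}\geq -KL(p\,\|\,p-\epsilon p_G)$ unless the off-target mass equals $\epsilon$ uniformly in $x$ --- is exactly the paper's own informal step (``if the minimum can be achieved''), so your treatment is if anything slightly more careful than the original.
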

Proposition \ref{prop2} indicates that minimizing the generator objective $C(G)$ is equivalent to minimizing $JS(p||p_{G})$ and maximizing $KL(p||p-\epsilon p_{G})$ simultaneously. To minimize $JS(p||p_{G})$, we need to train $p_{G}$ close to $p$; however, to maximize $KL(p||p-\epsilon p_{G})$, we need to increase the difference between $p$ and $p-\epsilon p_{G}$. 
If the optimal solution $D$ of GAN-SSL for the supervised objective can reach the theoretical maximum, i.e., $\epsilon = 0$, then minimizing $C(G)$ is equivalent to minimizing $JS(p||p_{G})$. 
Since the Jensen-Shannon divergence between two distributions is always non-negative and zero iff they are equal, then $C^{*} = -2\log 2$ is the global minimum of $C(G)$, and the only solution is $p_{G} = p$, i.e., the generator distribution perfectly replicates the true data distribution such that the optimal generator is perfect. 
Whereas, it is impossible to reach the theoretical maximum, then $\epsilon \neq 0$.
If we assume that $\epsilon$ is sufficiently small, i.e., $0<\epsilon\ll 1$, 
the optimal generator $p_G$ will not be $p$. 
Otherwise, if $p_{G} = p$, $JS(p||p_{G})$ can reach its minimum, and $KL(p||p-\epsilon p_{G})=KL(p||p(1-\epsilon))$ becomes very small, 
which is in contrast to maximizing $KL(p||p-\epsilon p_{G})$ in the objective $C(G)$. 


Different from the theoretical result of \cite{Dai2017good} that good semi-supervised learning requires a bad generator, 
Proposition \ref{prop2} shows that the error of the optimal generator distribution relative to the true data distribution largely depends on the optimal discriminator.
First, when the perfect discriminator for the supervised objective can reach its theoretical maximum,  
the global optimal generator of GAN-SSL is $p_{G} = p$, i.e., the optimal generator $G$ is perfect.  
However, since the theoretical maximum cannot be reached in practice, 
one cannot expect to learn a perfect generator, i.e., the optimal generator $G$ is imperfect. 
In addition, we note that although the optimal generator is imperfect in actual situations, it can be similar to the original data distribution. 
As shown in the work of \cite{Salimans2016Improved}, samples generated by the excellent GAN-SSL generator are not perfect 
but not completely different from the original data distribution, which is apparently inconsistent with the requirement of a "bad" generator in~\cite{Dai2017good}.  


\textbf{Case Study on Synthetic Data.}
To obtain a more intuitive understanding of Proposition \ref{prop2}, we conduct a case study based on a 1D synthetic dataset, where we can easily verify our theoretical analysis by visualizing the model behaviors. The training dataset is sampled from the Gaussian distribution $p = \mathcal{N}(0, 0.4^2)$. Based on Proposition \ref{prop2}, we study the relationship between the original data distribution $p$ and the generator distribution $p_G$ obtained by directly training the following generator objective function $C(G) = -KL(p||p-\epsilon p_{G})+2JS(p||p_{G})$.
We use the generator architecture of a fully connected neural network with ReLU activations: 1-100-100-1.
We give the results of comparing $p$ and $p_G$ when $\epsilon = 0$, $\epsilon = 0.1$ and $\epsilon = 0.2$ in Figure \ref{fig-1}, respectively. 
As theoretically analyzed above, for a constant $\epsilon = 0$, minimizing $C(G)$ is equivalent to minimizing $JS(p||p_{G})$, and the optimal solution is $p_{G} = p$ (as shown in Figure \ref{fig-1} (a)). As shown in Figure \ref{fig-1} (b) and (c), when $\epsilon \neq 0$, $p_{G}$ is imperfect and the greater $\epsilon$ is, the larger the gap between $p_{G}$ and $p$. 
Although there is a certain gap between the generator distribution and the data distribution, $p_{G}$ is similar to $p$, and most of their supports overlap. 
Therefore, in this case, different from GANs, the optimal generator cannot be used to generate samples.
\begin{figure*}[htbp]
	\centering
	\includegraphics[scale=0.50]{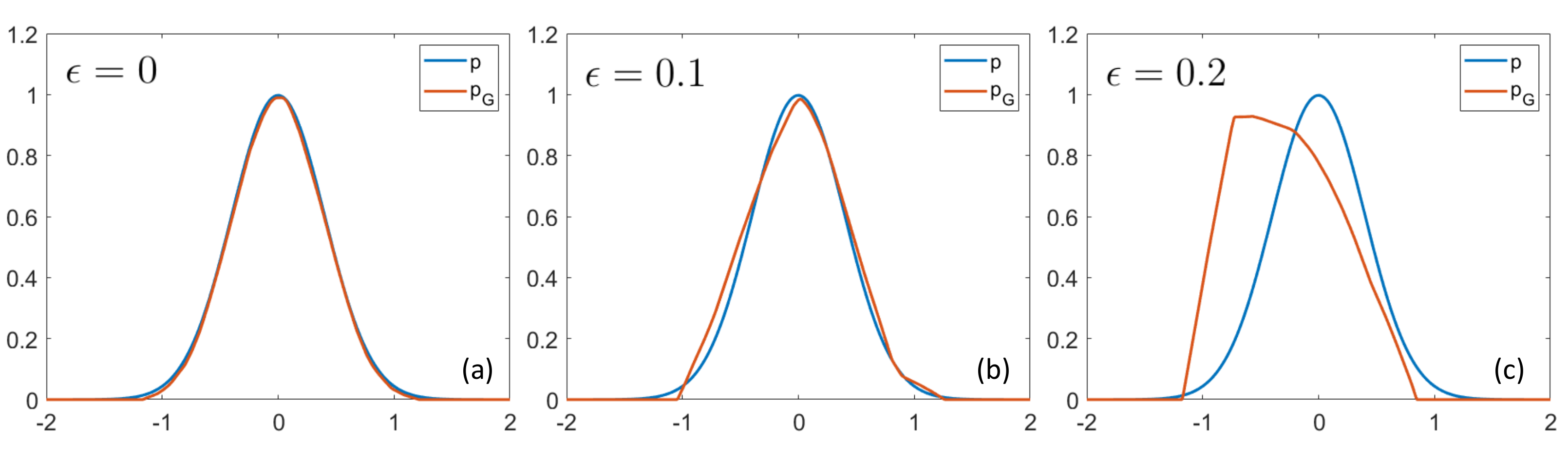}
	\vspace{0.1cm}
	\caption{The comparison results of the data distribution $p$ (blue) and the generator distribution $p_G$ (red) when $\epsilon = 0$, $\epsilon = 0.1$ and $\epsilon = 0.2$, respectively. 
		Here, $\epsilon$ can somehow represent the error between the output of the perfect discriminator and its theoretical maximum.}
	\label{fig-1}
\end{figure*}

\subsection{Optimal discriminator on unlabeled data}
Finally, to study the behavior of the optimal discriminator of GAN-SSL on unlabeled data, 
we first present the assumption on the convergence conditions of the optimal discriminator and generator and the manifold conditions on the dataset. 
Let $\Omega_{\mathcal{L}}$ be the set of labeled data. 
Let $\Omega$, $\Omega_{\mathcal{G}}$ be the data manifold and generated data manifold, respectively. 
Obviously, we have $\Omega_{\mathcal{L}} \subset \Omega$, and the unlabeled data should be sampled from $\Omega \setminus \Omega_{\mathcal{L}}$. 
Denote $\Omega = \cup_{k=1}^{K}\Omega^{k}$, where $\Omega^{k}$ is the data manifold of class $k$. 
Denote $\Omega^{k} = \cup_l \Omega^{kl}$, 
where $\Omega^{kl}$ is a connected subdomain. 

\begin{assumption}\label{assump1}
	Suppose $D, G$ are optimal solutions of the GAN-SSL objective \eqref{eq-minimax}. We assume that:\\
	(1) for any $(x,y)\in \Omega_{\mathcal{L}}$, we have $\omega_{y}^{T}f(x) > \omega_{k}^{T}f(x)$ for any other class $k\neq y$;\\
	(2) for any $x \in \Omega$, $\max_{k=1}^{K}\omega_{k}^{T}f(x) > 0$; for any $x\in \Omega_{\mathcal{G}} \setminus \bar{\Omega}$, $\max_{k=1}^{K}\omega_{k}^{T}f(x) < 0$;\\
	(3) each $\Omega^{jl}$ contains at least one labeled data point;\\
	(4) for any $x_k \in \Omega^{k}$ and $j\neq k$, there exists a connected subdomain $\Omega^{jl}$, a labeled data point $x_j\in \Omega^{jl}$, 
	a generated data point $x_g \in \Omega_{\mathcal{G}} \setminus \bar{\Omega}$, and $0 < \alpha < 1$, such that $f(x_g) = \alpha f(x_k) + (1 - \alpha) f(x_j)$.
\end{assumption}
\textbf{The Reasonableness of Assumption \ref{assump1} (1).} According to the theoretical analysis for the optimal discriminator (Proposition \ref{prop1}), we found that optimizing the discriminator of GAN-SSL is equivalent to optimizing that of supervised learning. Because $D$ is an optimal solution of the GAN-SSL objective, $D$ is an optimal solution of the supervised learning objective. Naturally, Assumption \ref{assump1} (1) holds, i.e., the discriminator has a correct decision boundary for labeled data given that the discriminator has infinite capacity. 

\textbf{The Reasonableness of Assumption \ref{assump1} (2).} 
Proposition \ref{prop2} implies that the optimal generator we obtain in practice is not a perfect generator such that we assume $\Omega_{\mathcal{G}} \setminus \bar{\Omega} \neq \emptyset$. We ignore the case of $\Omega_{\mathcal{G}} \subset \Omega, \Omega \setminus \bar{\Omega}_{\mathcal{G}}\neq \emptyset$ since it is almost impossible to ensure that our optimal generator can output only true data but with different probabilities in practice, unless we obtain the worst model with model collapse. 
Similar to \cite{Dai2017good}, we also make a strong assumption regarding the true-fake correctness of the true data ($\Omega$) and fake data ($\Omega_{\mathcal{G}} \setminus \bar{\Omega}$). 
In other words, we assume that the sampling of unlabeled data is good enough to achieve the best generalization ability on the true data manifold. 

\textbf{The Reasonableness of Assumption \ref{assump1} (3).} Intuitively, one cannot expect to achieve a good classification performance on a connected subdomain with no label information, since we can optionally set this connected subdomain to any class but with no influence on the objective. 

\textbf{The Reasonableness of Assumption \ref{assump1} (4).} 
Apparently, Assumption \ref{assump1} (3) is a sufficient condition for this assumption. 
In addition, since the optimal generator is imperfect and we need only its existence, 
one can expect to be able to achieve this condition.  

Now, based on Assumption \ref{assump1}, we give the main result of this subsection.  
\begin{proposition}\label{prop3}
	Given the conditions in Assumption \ref{assump1}, for all classes $k \leq K$, for all data space points $x_k\in \Omega^{k}$, we have $\omega_{k}^{T}f(x_k) > \omega_{j}^{T}f(x_k)$ for any $j\neq k$.
\end{proposition}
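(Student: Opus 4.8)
The plan is to fix a class $k$, a point $x_k \in \Omega^{k}$, and an arbitrary competing class $j \neq k$, and to prove the single sign statement $\omega_{j}^{T}f(x_k) < 0$; once this is established for every $j \neq k$, the positive-max clause of Assumption \ref{assump1} (2) will force the unique positive logit at $x_k$ to sit at class $k$. The engine of the whole argument is that each logit $x \mapsto \omega_{m}^{T}f(x)$ is linear in the feature vector $f(x)$, which is exactly what lets me exploit the convex-combination structure handed to me by Assumption \ref{assump1} (4).

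First I would invoke Assumption \ref{assump1} (4) for this $x_k$ and $j$ to obtain a labeled point $x_j \in \Omega^{jl}$ of class $j$, a generated point $x_g \in \Omega_{\mathcal{G}} \setminus \bar{\Omega}$, and a weight $0 < \alpha < 1$ with $f(x_g) = \alpha f(x_k) + (1-\alpha) f(x_j)$. Because $x_j$ is a labeled class-$j$ point, Assumption \ref{assump1} (1) makes class $j$ attain the largest logit at $x_j$; because $x_j \in \Omega$, Assumption \ref{assump1} (2) makes that maximal logit strictly positive, so $\omega_{j}^{T}f(x_j) > 0$. Because $x_g$ lies in $\Omega_{\mathcal{G}} \setminus \bar{\Omega}$, the second clause of Assumption \ref{assump1} (2) forces every logit there to be negative, and in particular $\omega_{j}^{T}f(x_g) < 0$.

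Next I would expand $\omega_{j}^{T}f(x_g) = \alpha\,\omega_{j}^{T}f(x_k) + (1-\alpha)\,\omega_{j}^{T}f(x_j)$ by linearity. Since $(1-\alpha)\,\omega_{j}^{T}f(x_j) > 0$ while the whole expression is negative and $\alpha > 0$, this rearranges to $\omega_{j}^{T}f(x_k) < 0$. As $j \neq k$ was arbitrary, every off-class logit at $x_k$ is strictly negative. Finally, since $x_k \in \Omega$, Assumption \ref{assump1} (2) guarantees $\max_{m}\omega_{m}^{T}f(x_k) > 0$; the only index not already ruled out is $m = k$, so $\omega_{k}^{T}f(x_k) > 0 > \omega_{j}^{T}f(x_k)$ for all $j \neq k$, which is the claim.

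The step I expect to be the crux is the sign transfer in the third paragraph: knowing only that the generated point's logits are negative is not enough, and one must pair that negativity with the \emph{strict} positivity of $\omega_{j}^{T}f(x_j)$ coming from the labeled representative to pin down the sign of $\omega_{j}^{T}f(x_k)$. Securing that positivity is where the work hides, since it requires chaining Assumption \ref{assump1} (1), which locates the largest logit at class $j$, with Assumption \ref{assump1} (2), which makes it positive, and it implicitly leans on Assumption \ref{assump1} (3)/(4) to guarantee that such an $x_j$ and intermediate fake point $x_g$ exist at all. After that, aggregating over all $j \neq k$ and applying the positive-max condition once is routine.
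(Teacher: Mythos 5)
Your proof is correct, and it takes a genuinely different logical route from the paper's, while using the same raw ingredients. The paper argues by contradiction: it fixes the maximizing competitor $j = \argmax_{j\neq k}\omega_{j}^{T}f(x_k)$, supposes $\omega_{k}^{T}f(x_k) \leq \omega_{j}^{T}f(x_k)$, and crucially needs that hypothesis to conclude $\omega_{j}^{T}f(x_k) = \max_{i}\omega_{i}^{T}f(x_k) > 0$ via the positive-max clause of Assumption \ref{assump1} (2), so that the convex combination $\alpha\,\omega_{j}^{T}f(x_k) + (1-\alpha)\,\omega_{j}^{T}f(x_j)$ comes out positive and contradicts $\omega_{j}^{T}f(x_g) < 0$. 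You dispense with the contradiction entirely: from $\omega_{j}^{T}f(x_g) < 0$ and $\omega_{j}^{T}f(x_j) > 0$ alone, the identity $\alpha\,\omega_{j}^{T}f(x_k) = \omega_{j}^{T}f(x_g) - (1-\alpha)\,\omega_{j}^{T}f(x_j)$ already forces $\omega_{j}^{T}f(x_k) < 0$ for every $j \neq k$, and a single application of the positive-max clause at $x_k \in \Omega$ then places the unique positive logit at class $k$. Both arguments rest on the same three facts --- the feature-space convex combination from Assumption \ref{assump1} (4), negativity of all logits at the fake point $x_g \in \Omega_{\mathcal{G}} \setminus \bar{\Omega}$, and strict positivity of the class-$j$ logit at its labeled representative via Assumptions \ref{assump1} (1) and (2) --- but your packaging buys two things: it removes the paper's case split between labeled and unlabeled $x_k$ (your argument covers both uniformly, and is consistent with Assumption \ref{assump1} (1) when $x_k$ happens to be labeled), and it delivers a strictly stronger conclusion, namely the sign separation $\omega_{j}^{T}f(x_k) < 0 < \omega_{k}^{T}f(x_k)$ rather than merely the ordering of logits. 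The only structural difference worth noting is that you invoke Assumption \ref{assump1} (4) once per competing class $j$, whereas the paper's argmax reduction consumes it for a single $j$ per point; since the assumption is quantified over all $j \neq k$, this is harmless.
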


We remark that the proof of Proposition \ref{prop3} is similar to that in \cite{Dai2017good} but with different conditions;  
see the supplementary material for more details. 
Proposition \ref{prop3} guarantees that given the convergence conditions of the optimal discriminator and generator, which are induced by 
Proposition \ref{prop1} and \ref{prop2}, respectively, if the labeled data can traverse all subdomains of the data manifold, 
then the discriminator $D$ will be perfect on the data manifold, i.e., 
it can also learn correct decision boundaries for all unlabeled data. 
Now, we discuss the difference between our Assumption \ref{assump1} and Assumption $1$ in \cite{Dai2017good}. 
\begin{enumerate}
	\item As discussed above in subsection $3.1$, the first Assumption $1$ (1) in \cite{Dai2017good} is over-assumed, since they cannot expect to obtain a perfect discriminator by optimizing the GAN-SSL objective. However, our Proposition \ref{prop1} can support our Assumption \ref{assump1} (1). 
	\item Their Assumption $1$ (2) and Assumption $1$ (3) correspond to our Assumption \ref{assump1} (2). We all assume that the optimal $D$ can result in a perfect decision regarding true-fake correctness. 
	However, since our optimal $G$ is not a bad one, as a result of Proposition \ref{prop2}, we assume only that the data in $\Omega_{\mathcal{G}} \setminus \bar{\Omega}$ are fake. 
	\item Unlike \cite{Dai2017good}, we additionally give a reasonable assumption on the data manifold, i.e, Assumption \ref{assump1} (3). By a careful check, one can find that the authors of \cite{Dai2017good} ignore this assumption because they implicitly assume that after embedding $f$, the feature space of each class is a connected domain. Otherwise, the proof of their Proposition $2$ is not correct, since one can set a connected subdomain of the feature space $F_k$ without label information to any class by a similar contradiction proof. 
	\item Similar to our Assumption \ref{assump1} (4), \cite{Dai2017good} make a stronger assumption before the definition of their Assumption $1$ by setting $G$ as a complement generator. Given a complement generator, our Assumption \ref{assump1} (4) can clearly be satisfied. 
	Note that under their definition, if the optimal generator $G$ is a complementary one, the feature space of the generated data will fill the complement set of the feature space of the true data. In other words, they hope that the "bad" generator can not only generate fake data but also generate all the existing fake data. 
	Apparently, this assumption is too strong and requires the generator to offer a high representation ability, since our true data manifold is always low-dimensional. However, we need only the existence of a generated fake data point such that its feature can be linearly represented by that of two true data points with at least one labeled data point. 
	This existence has a high probability to be satisfied given only a connected subdomain $\Omega^{jl}$ and an imperfect generator. 
	Here, we actually release the requirement for a complement generator. 
\end{enumerate}

\textbf{The Reasonableness of Assumption \ref{assump1} (3) on Synthetic Data.}
We design a binary classification problem with $\Omega = \Omega^{1} \cup \Omega^{2}$, where $\Omega^{1} = \Omega^{11} \cup \Omega^{12}$ and $\Omega^{11}$, $\Omega^{12}$ and $\Omega^{2}$ are three bounded 2D circles with no interaction. 
The unlabeled data are uniformly sampled from $\Omega$. 
If Assumption \ref{assump1} (3) is satisfied, each connected domain contains at least one labeled data point, as shown in Figure \ref{fig-3} (a1).
Then, as shown in Figure \ref{fig-3} (a4), we can easily train the discriminator to learn the correct decision boundary. 
However, if Assumption \ref{assump1} (3) is not satisfied, for the data in class $1$, we sample only the labeled data in subdomain $\Omega^{11}$, as shown in Figure \ref{fig-3} (b1). 
At this time, it is difficult for the discriminator to learn a correct decision boundary. 
As shown in Figure \ref{fig-3} (b4), (b6), and (b8), we choose three kinds of results under several training processes. For the unlabeled data in subdomain $\Omega^{12}$, the classification results are uncertain.
Hence, a comparison between Figure \ref{fig-3} (a4), (b4), (b6), and (b8) can show that the condition of Assumption \ref{assump1} (3) is necessary to obtain a perfect discriminator on unlabeled data. 
Furthermore, as shown in Figure \ref{fig-3} (a3), (b3), (b5) and (b7), a comparison of the generated samples shows that there is indeed a certain gap between the generator distribution and the data distribution, while the optimal generator is not a complementary one, as claimed in \cite{Dai2017good}.
\begin{figure*}[htb]
	\centering
	\includegraphics[scale=0.52]{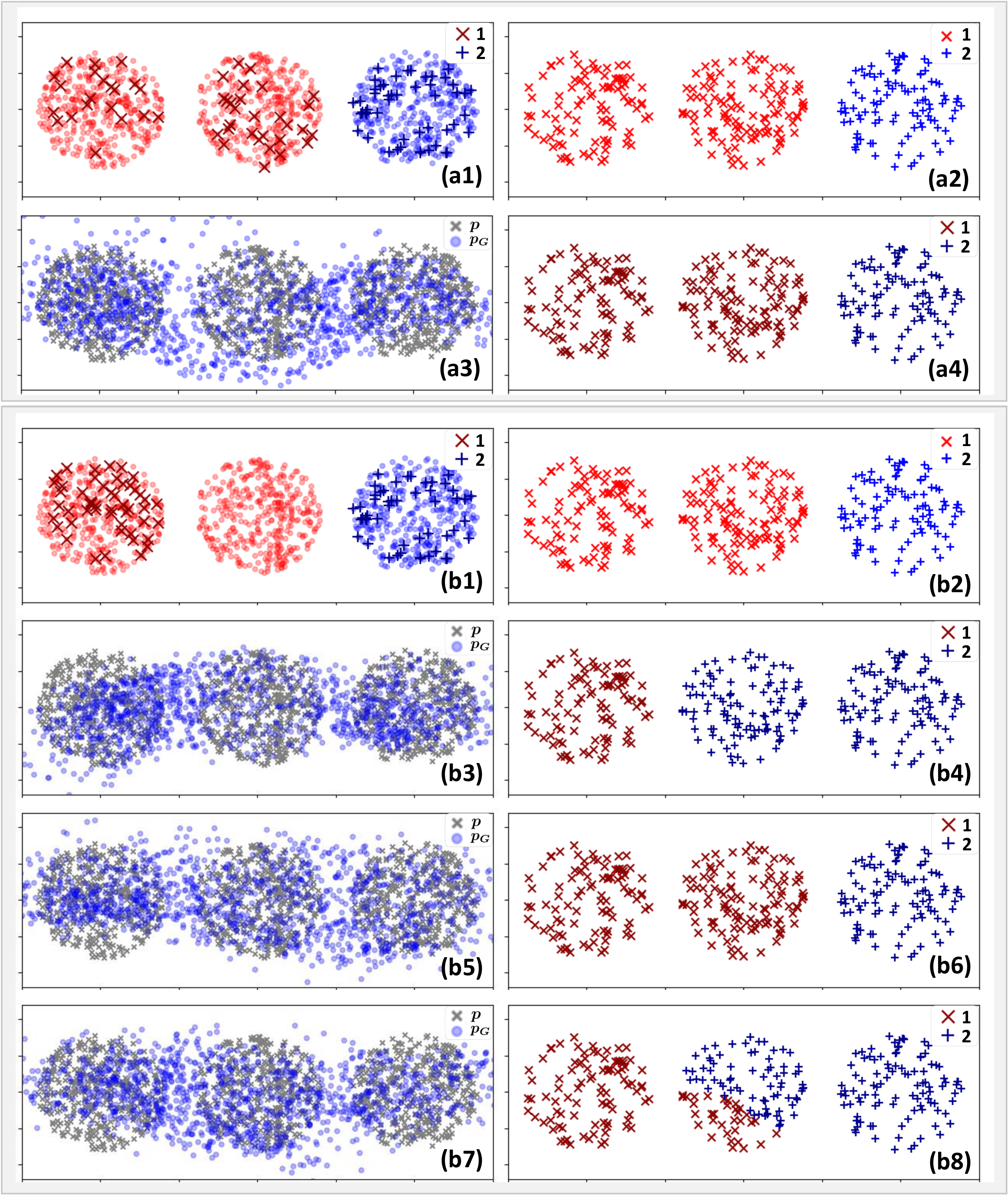}
	\vspace{0.1cm}
	\caption{The results of a binary classification problem on synthetic data when Assumption \ref{assump1} (3) is (a) or not (b) satisfied. 
		Labeled and unlabeled data are denoted by crosses and points, respectively, in (a1) and (b1). 
		Different colors indicate different classes in (a2), (a4), (b2), (b4), (b6), and (b8), where (a2) and (b2) are the ground truths on the test data and (a4), (b4), (b6), and (b8) are our results. 
		Generated and real data are denoted by blue points and gray crosses, respectively, in (a3), (b3), (b5), and (b7).
	}
	\label{fig-3}
\end{figure*}

\section{Conclusions}
Via a theoretical analysis, this paper answers the question of how GAN-SSL works. 
In conclusion, semi-supervised learning based on GANs will yield a perfect discriminator on both 
labeled (Proposition \ref{prop1}) and unlabeled data (Proposition \ref{prop3}) by learning an imperfect generator (Proposition \ref{prop2}), i.e., GAN-SSL can effectively improve the generalization ability in semi-supervised classification. In the future, the theoretical problems of more complex models, such as Triple-GAN and other methods, will be studied. In addition, the existence of Assumption \ref{assump1} (4) will undergo further theoretical and empirical investigations. 

\section*{Acknowledgements}
This work was supported in part by the Innovation Foundation of Qian Xuesen Laboratory of Space Technology, and in part by Beijing Nova Program of Science and Technology under Grant Z191100001119129. 

\bibliographystyle{ieee_fullname}
\bibliography{reference-File}

\clearpage
\section*{Appendix}
\subsection*{Proof of Proposition $1$}
\begin{proof}
Proof of Proposition $1$ (1).
Similar to the proof of Lemma $1$, given an optimal solution $D_{L} = (\omega, f)$ of the supervised objective $L_{D}$, 
due to the discriminator has infinite capacity, there exists $D^{*} = (\omega^{*}, f^{*})$ such that for all $x$ and $k\leq K$,
\beq\label{eq-1}
\exp(\omega_{k}^{*T}f^{*}(x)) 
= \frac{\exp(\omega_{k}^{T}f(x))}{\sum_{i=1}^{K}\exp(\omega_{i}^{T}f(x))}
\cdot \frac{p(x)}{p_{G}(x)}
\eeq
For all $x$,
\begin{align*}
&P_{D^{*}}(y|x, y\leq K)
= \frac{\exp(\omega_{y}^{*T}f^{*}(x))}{\sum_{i=1}^{K}\exp(\omega_{i}^{*T}f^{*}(x))} \\
=& \frac{\exp(\omega_{y}^{T}f(x))}{\sum_{i=1}^{K}\exp(\omega_{i}^{T}f(x))}
=P_{D_{L}}(y|x, y\leq K) 
\end{align*}
Then $L_{D^{*}} = L_{D_{L}}$. 
Based on the definition and given Eq.~(\ref{eq-1}), we can obtain 
\begin{equation*}
P_{D^{*}}(K+1|x) = \frac{1}{1+\sum_{i=1}^{K}\exp(\omega_{i}^{*T}f^{*}(x))} = \frac{p_{G}}{p+p_{G}}.
\end{equation*}
By the proof of Lemma $1$, $D^{*}$ is an optimal solution of $U_{D}$. 
Because $D_{L}$ maximizes $L_{D}$, $D^{*}$ also maximizes $L_{D}$. It follows that $D^{*}$ maximizes $J_{D}$.

Proof of Proposition $1$ (2). First, we should note that if $D^{*}$ maximizes the GAN-SSL objective $J_{D}$, then $D^{*}$ maximizes $U_{D}$. Otherwise, based on Lemma $1$, there exists another solution $D^{'} = (\omega^{'}, f^{'})$ such that $ L_{D^{'}} = L_{D^{*}}$ and $U_{D^{'}} > U_{D^{*}}$, i.e., $J_{D^{'}} > J_{D^{*}}$, leading to contradiction. 
That is to say, for any optimal solution $D^{*} = (\omega^{*}, f^{*})$ of the GAN-SSL objective $J_{D}$, $U_{D^{*}}$ reaches the extreme value; 
then, $D^{*}$ is also an optimal solution of $L_{D}$. Otherwise, due to the infinite capacity of the discriminator, there exists an optimal solution $D_{1}$ of the supervised objective $L_{D}$. 
Thus, based on Proposition $1$ (1), there exists $D_{1}^{*} = (\omega_{1}^{*}, f_{1}^{*})$ such that $L_{D_{1}^{*}} = L_{D_{1}} > L_{D^{*}}$ and $U_{D_{1}^{*}} = U_{D^{*}}$. 
Therefore, $J_{D_{1}^{*}} > J_{D^{*}}$, leading to contradiction, i.e., for any optimal solution $D^{*} = (\omega^{*}, f^{*})$ of $J_{D}$, $D^{*}$ is an optimal solution of $L_{D}$. 

Based on Proposition $1$ (1) and (2), we can obtain that maximizing $J_{D}$ is equivalent to maximizing the quantity $L_D$ and $U_D$, simultaneously. 
Then, the optimal solution of $J_D$ must also be the optimal solution of $U_D$. 
Similar to the theoretical results of [Goodfellow et al. 2014], for $G$ fixed, the optimal discriminator $D^*$ of the GAN-SSL objective is 
\begin{equation*}
P_{D^{*}}(K+1|x)= \frac{p_{G}(x)}{p(x)+p_{G}(x)},
\end{equation*}
and 
\begin{equation*}
P_{D^{*}}(y\leq K|x)= 1 - P_{D^{*}}(K+1|x) = \frac{p(x)}{p(x)+p_{G}(x)}.
\end{equation*}
This completes the proof.
\end{proof}

\subsection*{Proof of Proposition $2$}
\begin{proof}
By the definition of the discriminator $D$, $P_{D}(K+1|x) = \frac{1}{1+\sum_{i=1}^{K}\exp(\omega_{i}^{T}f(x))}$, 
and based on Proposition $1$ (3), $P_{D^*}(K+1|x) =\frac{p_{G}(x)}{p(x)+p_{G}(x)}$, 
then for the optimal discriminator $D = (\omega, f)$, $\frac{1}{1+\sum_{i=1}^{K}\exp(\omega_{i}^{T}f(x))} = \frac{p_{G}(x)}{p(x)+p_{G}(x)}$, such that
\begin{align*}
\sum_{i=1}^{K}\exp(\omega_{i}^{T}f(x))
&= \exp(\omega_{y}^{T}f(x)) 
+ \sum_{i\neq y}^{K}\exp(\omega_{i}^{T}f(x))\\
&= p(x)/p_{G}(x)
\end{align*} 
For a fixed optimal discriminator $D$, suppose there exists $0<\epsilon\ll 1$ such that the other logit output ($k\neq y$) of $D$ satisfies $(K-1)\exp (\omega_{k}^{T}f(x))\leq \epsilon$,
then, 
$\exp(\omega_{y}^{T}f(x)) \geq p/p_{G}-\epsilon$. If the minimum can be achieved, i.e., $\exp(\omega_{y}^{T}f(x)) = p/p_{G}-\epsilon$,
therefore, 
\begin{align*}
L_{DG} &= \mathbb{E}_{(x,y)\sim p(x,y)}\log P_{D}(y|x, y\leq K)\\
&= \mathbb{E}_{(x,y)\sim p(x,y)}\log(\frac{\exp(\omega_{y}^{T}f(x))}{\sum_{i=1}^{K}\exp(\omega_{i}^{T}f(x))})\\
&= \mathbb{E}_{x\sim p(x)}\log((\frac{p(x)}{p_{G}(x)}-\epsilon)  \frac{p_{G}(x)}{p(x)})\\
&= -\int_{x}p(x)\log \frac{p(x)}{p(x)-\epsilon p_{G}(x)}dx
\end{align*}
Then, 
\bena
C(G) &= L_{DG} + U_{G}\\
&= -\int_{x}p(x)\log \frac{p(x)}{p(x)-\epsilon p_{G}(x)}dx\\
&+ \int_{x}p\log\frac{p(x)}{p(x)+p_{G}(x)} 
+ p_{G}\log \frac{p_{G}(x)}{p(x)+p_{G}(x)}dx\\
&= -KL(p||p-\epsilon p_{G})+2JS(p||p_{G}) - 2\log 2 
\eena 
This completes the proof.
\end{proof}

\subsection*{Proof of Proposition $3$}
\begin{proof}  
First, if $x_k$ is a labeled data point, based on Assumption $1$ (1), we have $\omega_{k}^{T}f(x_k) > \omega_{j}^{T}f(x_k)$ for any $j\neq k$.
Then, we consider $x_k\in \Omega^{k}$ is an unlabeled data point. Without loss of generality, suppose $j = \text{arg}\max_{j\neq k}\omega_{j}^{T}f(x_k)$. Now, we prove it by contradiction. 

Suppose there exists a data space point $x_k\in \Omega^{k}$ and a class $j\neq k$, such that
\begin{equation}\label{eq-2}
\omega_{k}^{T}f(x_k) - \omega_{j}^{T}f(x_k)\leq 0
\end{equation}
By Assumption $1$ (3) and Assumption $1$ (4), there exists a connected subdomain $\Omega^{jl}$, a labeled data point $x_j\in \Omega^{jl}$, 
and a generated data point $x_g \in \Omega_{\mathcal{G}} \setminus \bar{\Omega}$, such that $f(x_g) = \alpha f(x_k) + (1 - \alpha) f(x_j)$
with $0 < \alpha < 1$.
Based on Assumption $1$ (2), $\omega_{j}^{T}f(x_g) < 0$. Thus, 
\begin{equation*}
\omega_{j}^{T}f(x_g) = \alpha \omega_{j}^{T}f(x_k) + (1 - \alpha) \omega_{j}^{T}f(x_j) < 0
\end{equation*}
By Assumption $1$ (1) and Assumption $1$ (2), for any $(x_j, j)\in \Omega_\mathcal{L}$, $\omega_{j}^{T}f(x_j)=\max_{k}\omega_{k}^{T}f(x_j)>0$. 
Moreover, by Eq. (\ref{eq-2}) and Assumption $1$ (2), for any $x_k\in \Omega^{k} \subset \Omega$, $\omega_{j}^{T}f(x_k) =\max_{i=1}^{K}\omega_{i}^{T}f(x_k) > 0$. 
Then, $\alpha \omega_{j}^{T}f(x_k) + (1 - \alpha) \omega_{j}^{T}f(x_j) > 0$, leading to contradiction. 
In summary, for all data space points $x_k\in \Omega^{k}$, we have $\omega_{k}^{T}f(x_k) > \omega_{j}^{T}f(x_k)$ for any $j\neq k$.
\end{proof}
\end{document}